\DeclareMathOperator*{\logit}{logit}
\begin{document}
\title{Squashed Shifted PMI Matrix: Bridging Word Embeddings and Hyperbolic Spaces}
%
%
\author{Zhenisbek Assylbekov\orcidID{0000-0003-0095-9409} \and
Alibi Jangeldin}
\authorrunning{F. Author and S. Author}
%
\institute{School of Sciences and Humanities, Nazarbayev University, Nur-Sultan, Kazakhstan\\\email{zhassylbekov@nu.edu.kz}}
\maketitle              
\begin{abstract}
We show that removing sigmoid transformation in the skip-gram with negative sampling (SGNS) objective does not harm the quality of word vectors significantly and at the same time is related to factorizing a squashed shifted PMI matrix which, in turn, can be treated as a connection probabilities matrix of a random graph. Empirically, such graph is a complex network, i.e. it has strong clustering and scale-free degree distribution, and is tightly connected with hyperbolic spaces. In short, we show the connection between static word embeddings and hyperbolic spaces through the squashed shifted PMI matrix using analytical and empirical methods.

\keywords{Word vectors  \and PMI \and Complex networks \and Hyperbolic geometry}
\end{abstract}
\section{Introduction}
Modern word embedding models \citep{mccann2017learned,peters2018deep,devlin2019bert} build vector representations of words in context, i.e. the same word will have different vectors when used in different contexts (sentences). Earlier models \citep{mikolov2013distributed,pennington2014glove} built the so-called static embeddings: each word was represented by a single vector, regardless of the context in which it was used. 

Despite the fact that static word embeddings are considered obsolete today, they have several advantages compared to contextualized ones. Firstly, static embeddings are trained much faster (few hours instead of few days) and do not require large computing resources (1 consumer-level GPU instead of 8--16 non-consumer GPUs). Secondly, they have been studied theoretically in a number of works \citep{levy2014neural,arora2016latent,hashimoto2016word,gittens2017skip,tian2017mechanism,ethayarajh2019towards,allen2019vec,allen2019analogies,assylbekov2019context,zobnin2019learning} but not much has been done for the contextualized embeddings \citep{reif2019visualizing}. Thirdly, static embeddings are still an integral part of deep neural network models that produce contextualized word vectors, because embedding lookup matrices are used at the input and output (softmax) layers of such models. Therefore, we consider it necessary to further study static  embeddings. 

With all the abundance of both theoretical and empirical studies on static vectors, they are not fully understood, as this work shows. For instance, it is generally accepted that good quality word vectors are inextricably linked with a low-rank approximation of the pointwise mutual information (PMI) matrix or the Shifted PMI (SPMI) matrix, but we show that vectors of comparable quality can also be obtained from a low-rank approximation of a \textit{Squashed} SPMI matrix (Section~\ref{sec:bpmi_wv}). Thus, a Squashed SPMI matrix is a viable alternative to standard PMI/SPMI matrices when it comes to obtaining word vectors. 

At the same time, it is easy to interpret the Squashed SPMI matrix with entries in $[0, 1)$ as a connection probabilities matrix for generating a random graph. Studying the properties of such a graph, we come to the conclusion that it is a so-called complex network, i.e. it has a strong clustering property and a scale-free degree distribution (Section~\ref{sec:bpmi_cn}). 

It is noteworthy that complex networks, in turn, are dual to hyperbolic spaces (Section~\ref{sec:cn_hg}) as was shown by \citet{krioukov2010hyperbolic}. Hyperbolic geometry has been used to train word vectors \citep{nickel2017poincare,tifrea2018poincar} and has proven its suitability --- in a hyperbolic space, word vectors need lower dimensionality than in the Euclidean space.

Thus, to the best of our knowledge, this is the first work that establishes simultaneously a connection between word vectors, a Squashed SPMI matrix, complex networks, and hyperbolic spaces. Figure~\ref{fig:summary} summarizes our work and serves as a guide for the reader.

\begin{figure}
    \centering
\begin{tikzpicture}
  \matrix (m) [matrix of math nodes,row sep=5em,column sep=5em,minimum width=10em]
  {
     \text{Squashed SPMI} & \text{Complex Networks} \\
     \text{Word Embeddings} & \text{Hyperbolic Spaces} \\};
  \path
    (m-2-1) edge node [left] {Section~\ref{sec:bpmi_wv}} (m-1-1)
    (m-1-1) edge node [below] {Section~\ref{sec:bpmi_cn}} (m-1-2)
    (m-1-2) edge node [right] {Section~\ref{sec:cn_hg}} (m-2-2);
\end{tikzpicture}
    \caption{Summary of our work}
    \label{fig:summary}
\end{figure}
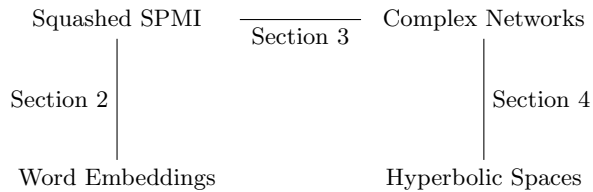

\subsection*{Notation}
We let $\mathbb{R}$ denote the real numbers. Bold-faced lowercase letters ($\mathbf{x}$) denote vectors, plain-faced lowercase letters ($x$) denote scalars, $\langle\mathbf{x},\mathbf{y}\rangle$ is the Euclidean inner product, $(a_{ij})$ is a matrix with the $ij$-th entry being $a_{ij}$. `i.i.d.' stands for `independent and identically distributed'. We use the sign $\propto$ to abbreviate `proportional to', and the sign $\sim$ to abbreviate `distributed as'.

Assuming that words have already been converted into indices, let $\mathcal{W}:=\{1,\ldots,n\}$ be a finite vocabulary of words. Following the setup of the widely used {\sc word2vec} model \citep{mikolov2013distributed}, we use \textit{two} vectors per each word $i$: (1) $\mathbf{w}_i\in\mathbb{R}^d$ when $i\in\mathcal{W}$ is a center word, (2) $\mathbf{c}_i\in\mathbb{R}^d$ when $i\in\mathcal{W}$ is a context word; and we assume that $d\ll n$. 

In what follows we assume that our dataset consists of co-occurence pairs $(i,j)$. We say that ``the words $i$ and $j$ co-occur'' when they co-occur in a fixed-size window of words. 
The number of such pairs, i.e. the size of our dataset, is denoted by $N$. Let $\#(i,j)$ be the number of times the words $i$ and $j$ co-occur, then $N=\sum_{i\in\mathcal{W}}\sum_{j\in\mathcal{W}}\#(i,j)$.

\section{Squashed SPMI and Word Vectors}\label{sec:bpmi_wv}
A well known skip-gram with negative sampling (SGNS) word embedding model of \citet{mikolov2013distributed} maximizes the following objective function
\begin{equation}\textstyle
    \sum_{i\in\mathcal{W}}\sum_{j\in\mathcal{W}}\#(i,j)\left(\log\sigma(\langle\mathbf{w}_i,\mathbf{c}_j\rangle)+k\cdot\mathbb{E}_{j'\sim p}[\log
    \sigma(-\langle\mathbf{w}_i,\mathbf{c}_{j'}\rangle)]\right),\label{eq:sgns}
\end{equation}
where $\sigma(x)=\frac{1}{1+e^{-x}}$ is the logistic sigmoid function, $p$ is a smoothed unigram probability distribution for words\footnote{The authors of SGNS suggest $p(i)\propto\#(i)^{3/4}$.}, and $k$ is the number of negative samples to be drawn. Interestingly, training SGNS is approximately equivalent to finding a low-rank approximation of a Shifted PMI matrix \citep{levy2014neural} in the form $\log\frac{p(i,j)}{p(i)p(j)}-\log k\approx\langle\mathbf{w}_i,\mathbf{c}_j\rangle$,
where the left-hand side is the 
$ij$-th element of the $n\times n$ shifted PMI matrix, and the right-hand side is an element of a matrix with rank $\le d$ since $\mathbf{w}_i,\mathbf{c}_j\in\mathbb{R}^d$. This approximation (up to a constant shift) was later re-derived by \citet{arora2016latent,assylbekov2019context,allen2019vec,zobnin2019learning} under different sets of assuptions. In this section we show that constraint optimization of a slightly modified SGNS objective \eqref{eq:sgns} leads to a low-rank approximation of the  \textit{Squashed} Shifted PMI ($\sigma$SPMI) matrix, defined as $\sigma\mathrm{SPMI}_{ij}:=\sigma(\mathrm{PMI}_{ij}-\log k)$.

\begin{theorem}\label{thm:main}
Assuming $0<\langle\mathbf{w}_i,\mathbf{c}_j\rangle<1$, the following objective function
\begin{equation}
    \mathcal{L}=\sum_{i\in\mathcal{W}}\sum_{j\in\mathcal{W}}\underbrace{\#(i,j)\left(\log\langle\mathbf{w}_i,\mathbf{c}_j\rangle+ k\cdot\mathbb{E}_{j'\sim P}[\log(1-\langle\mathbf{w}_i,\mathbf{c}_{j'}\rangle)]\right)}_{\ell(\mathbf{w}_i,\mathbf{c}_j)},\label{eq:rsgns}
\end{equation}
reaches its optimum at $\langle\mathbf{w}_i,\mathbf{c}_j\rangle=\sigma\mathrm{SPMI}_{ij}$.
\end{theorem}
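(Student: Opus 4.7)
The plan is to follow the same strategy used by \citet{levy2014neural} for the original SGNS objective: treat each inner product $x_{ij}:=\langle\mathbf{w}_i,\mathbf{c}_j\rangle$ as a free variable in $(0,1)$ (ignoring the low-rank coupling), differentiate $\mathcal{L}$ with respect to $x_{ij}$, and show that the stationary point coincides with $\sigma\mathrm{SPMI}_{ij}$. The working assumption $0<x_{ij}<1$ makes both $\log x_{ij}$ and $\log(1-x_{ij})$ well defined, which is precisely the hypothesis of the theorem.

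First I would rewrite $\mathcal{L}$ so that every term depends on a single $x_{ij}$. The negative-sampling piece
\[
\sum_{i,j}\#(i,j)\,k\,\mathbb{E}_{j'\sim P}[\log(1-x_{ij'})]
\]
can be collapsed by summing over $j$ first and renaming $j'\to j$, which turns it into $\sum_{i,j} k\,\#(i)\,P(j)\log(1-x_{ij})$, where $\#(i):=\sum_j\#(i,j)$. Thus, up to terms that do not involve $x_{ij}$,
\[
\ell(\mathbf{w}_i,\mathbf{c}_j) \;=\; \#(i,j)\log x_{ij} \;+\; k\,\#(i)\,P(j)\log(1-x_{ij}).
\]

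Next I would impose the first-order condition $\partial\mathcal{L}/\partial x_{ij}=0$, giving
\[
\frac{\#(i,j)}{x_{ij}} \;=\; \frac{k\,\#(i)\,P(j)}{1-x_{ij}},
\]
and solve algebraically:
\[
x_{ij} \;=\; \frac{\#(i,j)}{\#(i,j)+k\,\#(i)\,P(j)} \;=\; \frac{1}{1+k\,\frac{\#(i)\,P(j)}{\#(i,j)}}.
\]
Taking $P(j)=p(j)=\#(j)/N$ (the unsmoothed unigram, as in \citet{levy2014neural}) and dividing top and bottom by $N$, the ratio $\#(i)P(j)/\#(i,j)$ becomes $p(i)p(j)/p(i,j)=e^{-\mathrm{PMI}_{ij}}$, so
\[
x_{ij} \;=\; \frac{1}{1+e^{-(\mathrm{PMI}_{ij}-\log k)}} \;=\; \sigma(\mathrm{PMI}_{ij}-\log k) \;=\; \sigma\mathrm{SPMI}_{ij},
\]
which is exactly the claimed optimizer.

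Finally I would confirm this stationary point is a maximum by noting that
\[
\frac{\partial^2\mathcal{L}}{\partial x_{ij}^2} \;=\; -\frac{\#(i,j)}{x_{ij}^2} \;-\; \frac{k\,\#(i)\,P(j)}{(1-x_{ij})^2} \;<\; 0
\]
on $(0,1)$, and observing that $\mathcal{L}$ decouples across $(i,j)$ pairs once the $x_{ij}$ are treated as independent, so the joint optimum is attained entrywise. The only delicate point, and the one worth flagging, is that this argument tacitly drops the rank constraint $\mathrm{rank}(x_{ij})\le d$; as in the SGNS analysis of \citet{levy2014neural}, the theorem should therefore be read as describing the unconstrained optimum that a low-rank factorization tries to approximate, and the assumption $0<x_{ij}<1$ is what guarantees the squashed form of the PMI on the right-hand side.
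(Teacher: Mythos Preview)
Your proof is correct and follows essentially the same route as the paper: expand the expectation over $j'$ using $\sum_j\#(i,j)=\#(i)$, isolate the $(i,j)$-term, set its derivative to zero, and algebraically identify the stationary value with $\sigma(\mathrm{PMI}_{ij}-\log k)$. The only differences are cosmetic (you work with counts where the paper normalizes to probabilities with a factor $N$) and that you add the second-derivative check and the explicit caveat about dropping the rank constraint, both of which the paper leaves implicit.
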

\begin{proof}
Expanding the sum and the expected value in \eqref{eq:rsgns} as in \citet{levy2014neural}, and defining $p(i,j):=\frac{\#(i,j)}{N}$, $p(i):=\frac{\#(i)}{N}$, we have
\begin{equation}
\mathcal{L}=N\sum_{i\in\mathcal{W}}\sum_{j\in\mathcal{W}}p(i,j)\cdot\log\langle\mathbf{w}_i,\mathbf{c}_j\rangle+p(i)\cdot p(j)\cdot k\cdot\log(1-\langle\mathbf{w}_i,\mathbf{c}_{j}\rangle).
\end{equation}
Thus, we can rewrite the individual objective $\ell(\mathbf{w}_i,
\mathbf{c}_j)$ in \eqref{eq:rsgns} as
\begin{equation}
    \ell=N\left[p(i,j)\cdot\log\langle\mathbf{w}_i,\mathbf{c}_j\rangle+ p(i)\cdot p(j)\cdot k\cdot\log(1-\langle\mathbf{w}_i,\mathbf{c}_{j}\rangle)\right].\label{eq:rsgns2}
\end{equation}
Differentiating \eqref{eq:rsgns2} w.r.t. $\langle\mathbf{w}_i,
\mathbf{c}_j\rangle$ we get
$$
\frac{\partial\ell}{\partial\langle\mathbf{w}_i,\mathbf{c}_j\rangle}=N\left[\frac{p(i,j)}{\langle\mathbf{w}_i,\mathbf{c}_j\rangle}-\frac{p(i)\cdot p(j)\cdot k}{1-\langle\mathbf{w}_i,\mathbf{c}_j\rangle}\right].
$$
Setting this derivative to zero gives
\begin{multline}
    \frac{p(i,j)}{p(i)p(j)}\cdot\frac1k=\frac{\langle\mathbf{w}_i,\mathbf{c}_j\rangle}{1-\langle\mathbf{w}_i,\mathbf{c}_j\rangle}\quad\Rightarrow\quad\log\frac{p(i,j)}{p(i)p(j)}-\log k=\log\frac{\langle\mathbf{w}_i,\mathbf{c}_j\rangle}{1-\langle\mathbf{w}_i,\mathbf{c}_j\rangle}\\
    \Leftrightarrow\quad\log\frac{p(i,j)}{p(i)p(j)}-\log k=\logit\langle\mathbf{w}_i,\mathbf{c}_j\rangle\\\Leftrightarrow\quad\sigma\left(\log\frac{p(i,j)}{p(i)p(j)}-\log k\right)=\langle\mathbf{w}_i,\mathbf{c}_j\rangle,\label{eq:factorization}
\end{multline}
where $\logit(q):=\log\frac{q}{1-q}$ is the logit function which is the inverse of the logistic sigmoid function, i.e. $\sigma(\logit(q))=q$. From \eqref{eq:factorization} we have $\sigma\mathrm{SPMI}_{ij}=\langle\mathbf{w}_i,\mathbf{c}_j\rangle$,
which concludes the proof.
\end{proof}
\begin{remark}
Since $\sigma(x)$ can be regarded as a smooth approximation of the Heaviside step function $H(x)$, defined as $H(x)=1$ if $x>0$ and $H(x)=0$ otherwise, it is tempting to consider a \textit{binarized} SPMI (BSPMI) matrix $H(\mathrm{PMI}_{ij}-\log k)$ instead of $\sigma$SPMI. Being a binary matrix, BSPMI can be interpreted as an adjacency matrix of a graph, however our empirical evaluation below (Table~\ref{tab:emb_eval}) shows that such strong roughening of the $\sigma$SPMI matrix degrades the quality of the resulting word vectors. This may be due to concentration of the SPMI values near zero (Figure~\ref{fig:spmi_hyperbolic}), while $\sigma(x)$ is approximated by $H(x)$ only for $x$ away enough from zero.
\end{remark}
\begin{remark}
The objective \eqref{eq:rsgns} differs from the SGNS objective \eqref{eq:sgns} only in that the former does not use the sigmoid function (keep in mind that $\sigma(-x)=1-\sigma(x)$). We will refer to the objective \eqref{eq:rsgns} as \textit{Nonsigmoid SGNS}.
\end{remark}

\subsection*{Direct Matrix Factorization}
Optimization of the Nonsigmoid SGNS \eqref{eq:rsgns} is not the only way to obtain a low-rank approximation of the $\sigma$SPMI matrix. A viable alternative is factorizing the $\sigma$SPMI matrix with the singular value decomposition (SVD):
$\sigma\mathrm{SPMI}=\mathbf{U}\mathbf{\Sigma}\mathbf{V}^\top$, with orthogonal $\mathbf{U},\mathbf{V}\in\mathbb{R}^{n\times n}$ and diagonal $\mathbf{\Sigma}\in\mathbb{R}^{n\times n}$, and then zeroing out the $n-d$ smallest singular values, i.e.
\begin{equation}
\sigma\mathrm{SPMI}\approx\mathbf{U}_{1:n,1:d}\mathbf{\Sigma}_{1:d,1:d}\mathbf{V}^\top_{1:d,1:n},\label{eq:truncated_svd}    
\end{equation}
where we use $\mathbf{A}_{a:b,c:d}$ to denote a submatrix located at the intersection of rows $a, a+1, \ldots, b$ and columns
$c, c + 1, \ldots, d$ of $\mathbf{A}$. By the Eckart-Young theorem \citep{eckart1936approximation}, the right-hand side of \eqref{eq:truncated_svd} is the closest rank-$d$ matrix to the $\sigma$SPMI matrix in Frobenius norm. The word and context embedding matrices can be obtained from \eqref{eq:truncated_svd} by setting 
$\mathbf{W}^{\text{SVD}}:=\mathbf{U}_{1:n,1:d}\sqrt{\mathbf{\Sigma}_{1:d,1:d}}$, and $\mathbf{C}^{\text{SVD}}:=\sqrt{\mathbf{\Sigma}_{1:d,1:d}}\mathbf{V}^\top_{1:d,1:n}$. When this is done for a positive SPMI (PSPMI) matrix, defined as $\max(\mathrm{PMI}_{ij}-\log k, 0)$, the resulting word embeddings are comparable in quality with those from the SGNS \citep{levy2014neural}. 

\subsection*{Empirical Evaluation of the $\sigma$SPMI-based Word Vectors}
To evaluate the quality of word vectors resulting from the Nonsigmoid SGNS objective and $\sigma$SPMI factorization, we use the well-known corpus, \texttt{text8}.\footnote{\url{http://mattmahoney.net/dc/textdata.html}.} 
We ignored words that appeared less than 5 times, resulting in a vocabulary of 71,290 words. The SGNS and Nonsigmoid SGNS embeddings were trained using our custom implementation.\footnote{\url{https://github.com/zh3nis/SGNS}
} The SPMI matrices were extracted using the {\sc hyperwords} tool of \citet{levy2015improving} and the truncated SVD was performed using the {\sc scikit-learn} library of \citet{scikit-learn}.
\setlength{\tabcolsep}{4pt}
\begin{table}[htbp]
\caption{Evaluation of word embeddings on the analogy tasks (Google and MSR) and on the similarity tasks (the rest). For word similarities evaluation metric is the Spearman's correlation with the human ratings, while for word analogies it is the percentage of correct answers.}
\begin{center}
\begin{tabularx}{\textwidth}{l | c c c c | c c}
\toprule
Method & WordSim & MEN  & M. Turk & Rare Words & Google & MSR \\
\midrule
 SGNS & \textbf{.678}  & \textbf{.656} & .690 & \textbf{.334}  & \textbf{.359} & \textbf{.394} \\
 Nonsigm. SGNS & .649 & .649 & \textbf{.695} & .299 & .330 & .330 \\
 \midrule
 PMI + SVD & \textbf{.663} & \underline{.667} & \textbf{.668} & \textbf{.332} & \textbf{.315} & \underline{.323} \\
 SPMI + SVD & .509 & .576 & .567 & .244 & .159 & .107 \\
 PSPMI + SVD & .638 & \textbf{.672} & .658 & .298 & .246 & .207\\
 $\sigma$SPMI + SVD & \underline{.657} & .631 & \underline{.661} & \underline{.328} & \underline{.294} & \textbf{.341} \\
 BSPMI + SVD & .623 & .586 & .643 & .278 & .177 & .202\\
\bottomrule
\end{tabularx}
\end{center}
\label{tab:emb_eval}
\end{table}
The trained embeddings were evaluated on several word similarity and word analogy tasks: {\sc WordSim} \citep{finkelstein2002placing}, {\sc MEN} \citep{bruni2012distributional}, {\sc M.Turk} \citep{radinsky2011word}, {\sc Rare Words} \citep{luong2013better}, {\sc Google} \citep{mikolov2013efficient}, and {\sc MSR} \citep{mikolov2013linguistic}. We used the {\sc Gensim} tool of \citet{rehurek_lrec} for evaluation. For answering analogy questions ($a$ is to $b$ as $c$ is to $?$) we use the {\sc 3CosAdd} method of \citet{levy2014linguistic} and the evaluation metric for the analogy questions is the percentage of correct answers. We mention here that our goal is not to beat state of the art, but to compare SPMI-based embeddings (SGNS and SPMI+SVD) versus $\sigma$SPMI-based ones (Nonsigmoid SGNS and $\sigma$SPMI+SVD). The results of evaluation are provided in Table \ref{tab:emb_eval}. 

As we can see the Nonsigmoid SGNS embeddings in general underperform the SGNS ones but not by a large margin. $\sigma$SPMI shows a competitive performance among matrix-based methods across most of the tasks. Also, Nonsigmoid SGNS and $\sigma$SPMI demonstrate comparable performance as predicted by Theorem~\ref{thm:main}. Although BSPMI is inferior to $\sigma$SPMI, notice that such aggressive compression as binarization still retains important information on word vectors.

\begin{figure}[htbp]
\begin{minipage}{.67\textwidth}
\begin{center}
\includegraphics[width=.49\textwidth]{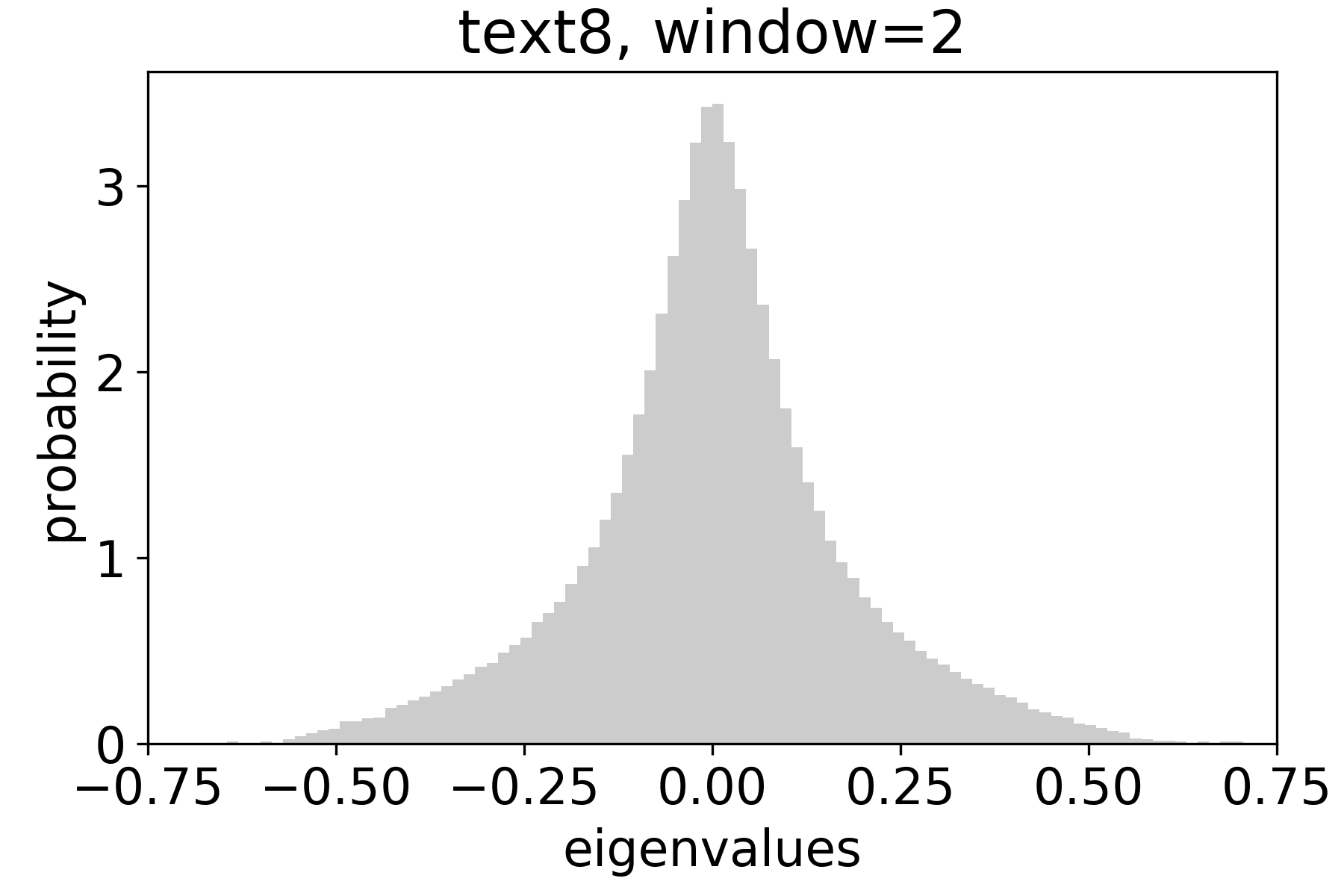}
\includegraphics[width=.49\textwidth]{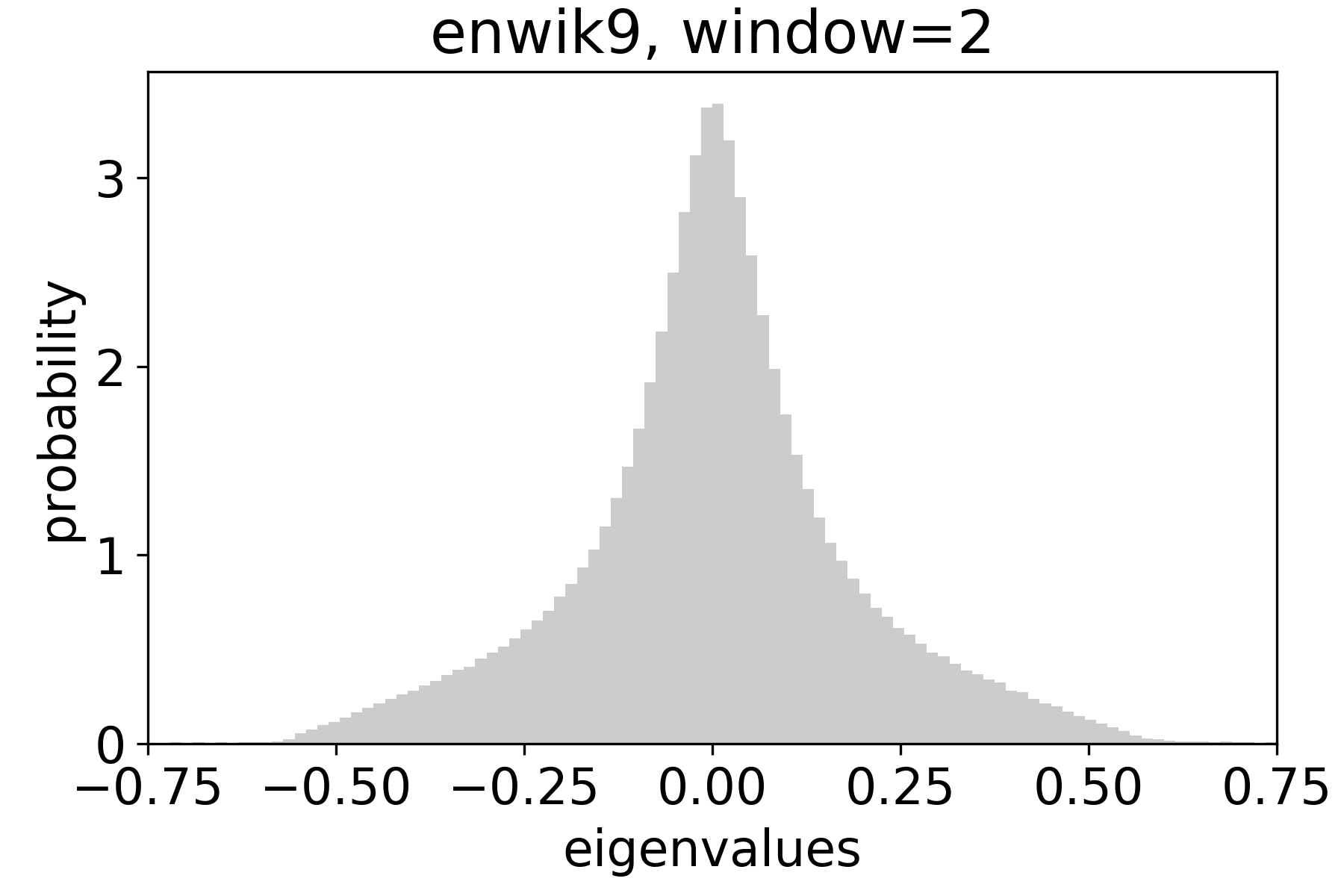}\\
\includegraphics[width=.49\textwidth]{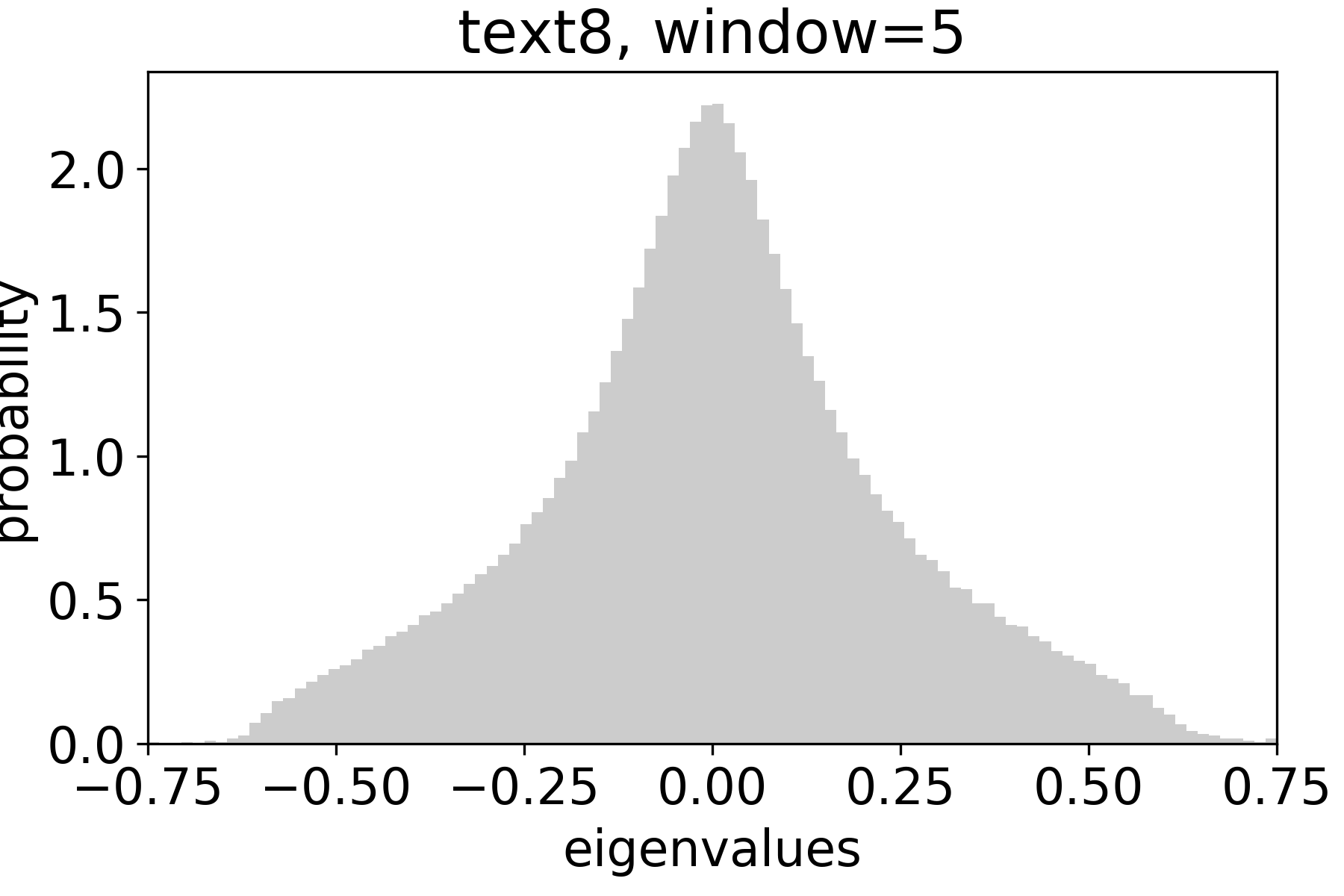}
\includegraphics[width=.49\textwidth]{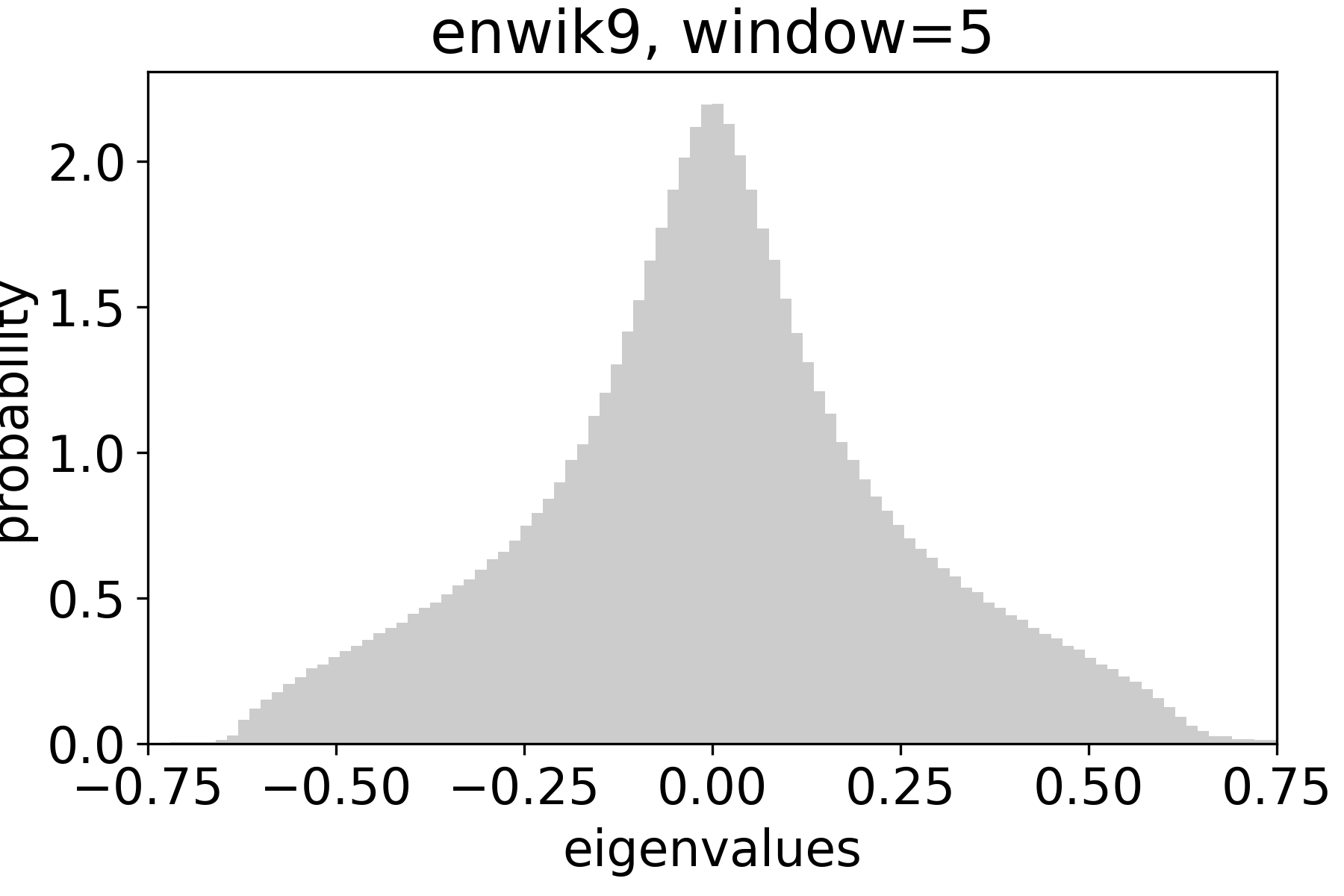}
\end{center}
\end{minipage}\begin{minipage}{.33\textwidth}
\includegraphics[width=\textwidth]{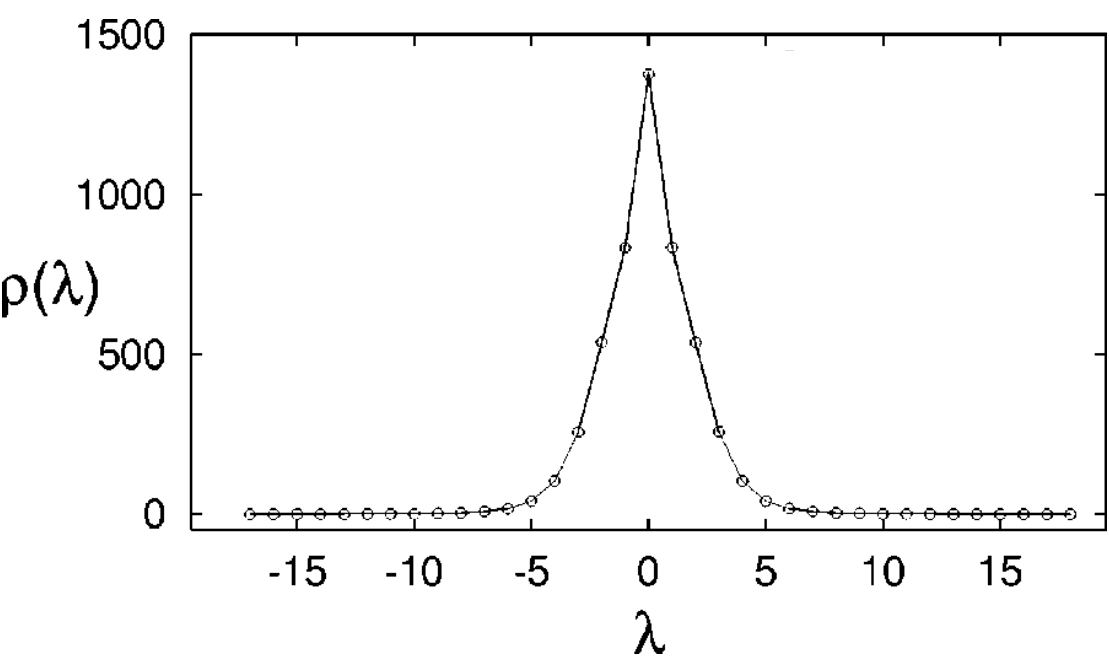}\\\includegraphics[width=\textwidth]{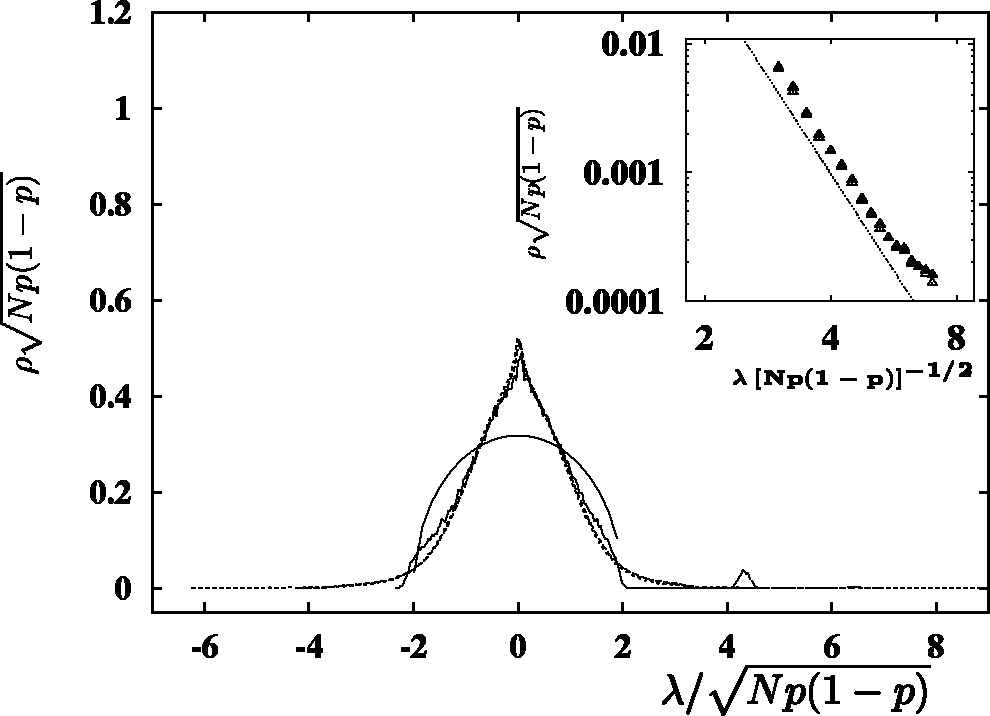}
\end{minipage}
\caption{Spectral distribution of the $\sigma$SPMI-induced graphs (left and middle columns), and of scale-free random graphs with strong clustering property (right top: \citet{goh2001spectra}, right bottom: \citet{farkas2001spectra}). When generating several random graphs from the same $\sigma$SPMI matrix, their eigenvalue distributions are visually indistinguishable, thus we display the results of one run per each matrix.}
\label{fig:eigvals}
\end{figure}
\begin{figure}[htbp]
\centering
\includegraphics[width=.45\textwidth]{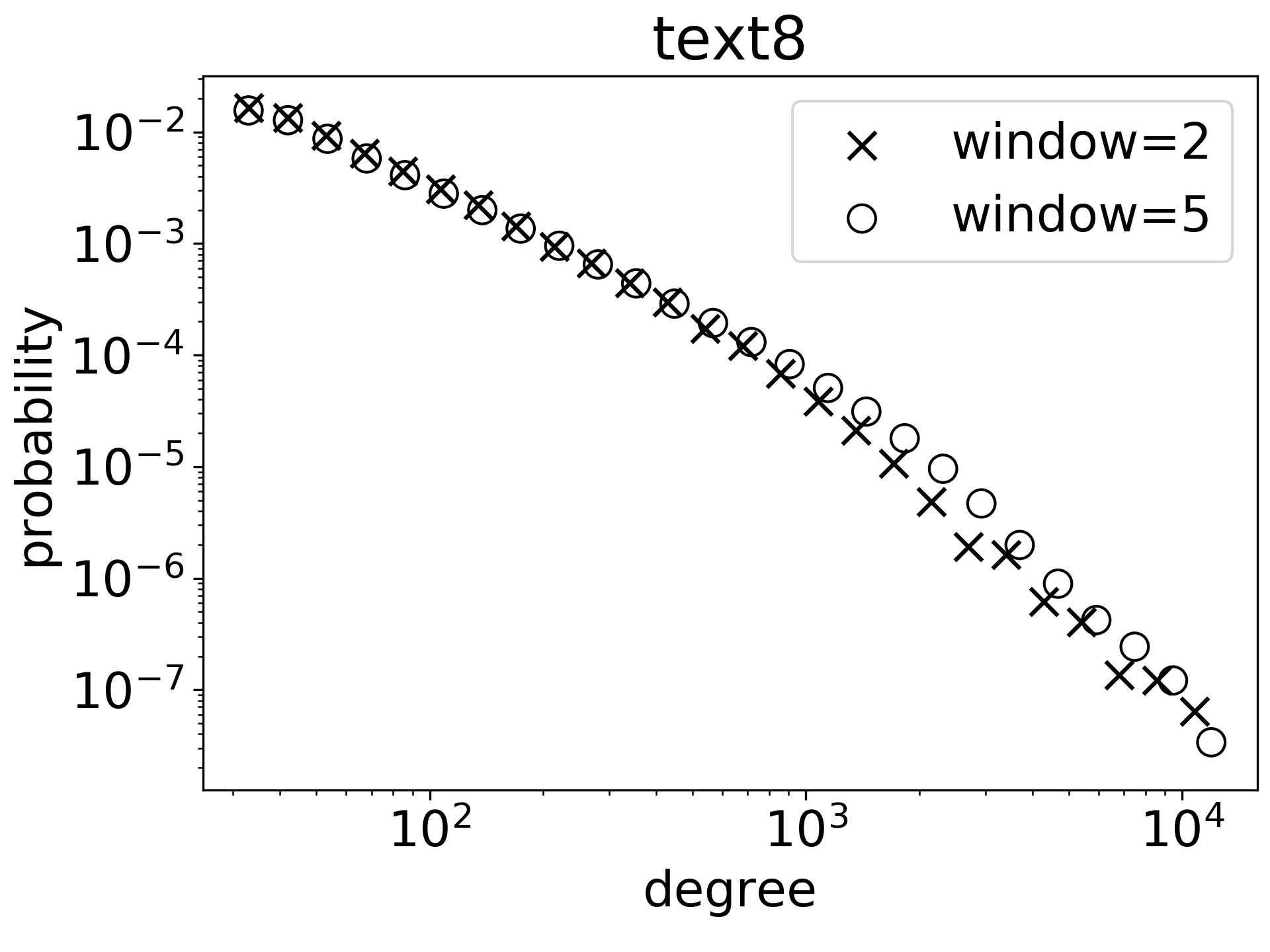}\hfill\includegraphics[width=.45\textwidth]{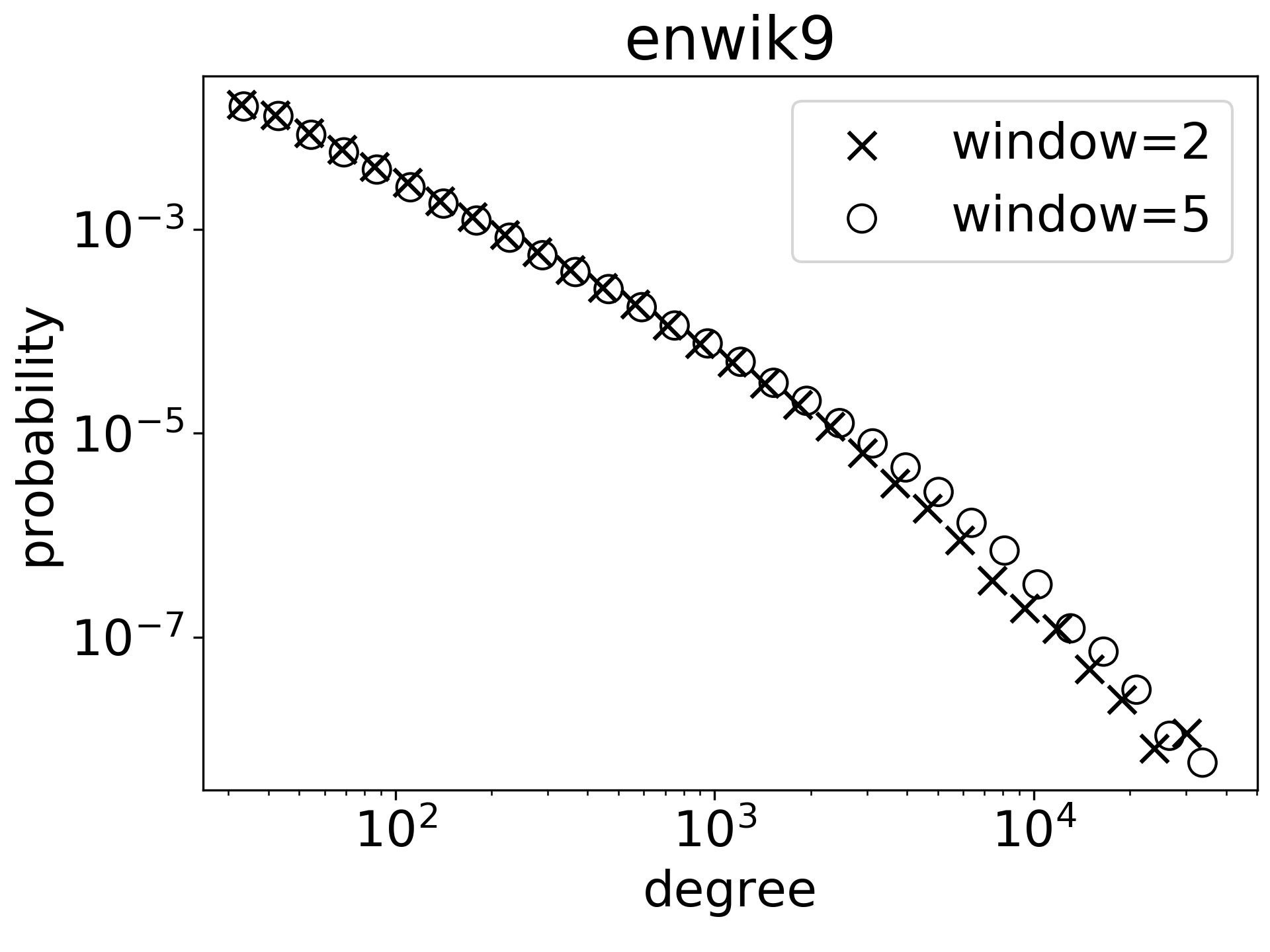}
\caption{Degree distributions of the $\sigma$SPMI-induced graphs. The axes are on logarithmic scales.}
\label{fig:degree_dist}
\end{figure}
\setlength{\tabcolsep}{9pt}
\begin{table}[htbp]
\caption{Clustering coefficients of the $\sigma$SPMI-induced graphs. For each corpus--window combination we generate ten graphs and report 95\% confidence intervals across these ten runs.}
    \centering
    \begin{tabularx}{\textwidth}{c c c c c}
        \toprule
         & \multicolumn{2}{c}{\texttt{text8}} & \multicolumn{2}{c}{\texttt{enwik9}} \\
         & window $=2$ & window $=5$ & window $=2$ & window $=5$\\
         \midrule
        $C$ & $.1341\pm.0006$ & $.1477\pm.0005$ & $.1638\pm.0006$ &  $.1798\pm.0004$\\
        $\bar{k}/n$ & $.0014\pm.0000$ & $.0030\pm.0000$ & $.0006\pm.0000$ & $.0012\pm.0000$\\
        \bottomrule
    \end{tabularx}
    \label{tab:clustering}
\end{table}

\section{$\sigma$SPMI and Complex Networks}\label{sec:bpmi_cn}

$\sigma$SPMI matrix has the following property: its entries $\sigma\text{SPMI}_{ij}\in[0, 1)$ can be treated as connection probabilities for generating a random graph. As usually, by a graph $\mathcal{G}$ we mean a set of vertices $\mathcal{V}$ and a set of edges $\mathcal{E}\subset\mathcal{V}\times\mathcal{V}$. It is convenient to represent graph edges by its adjacency matrix $(e_{ij})$, in which $e_{ij}=1$ for $(i,j)\in\mathcal{E}$, and $e_{ij}=0$ otherwise. The graph with $\mathcal{V}:=\mathcal{W}$ and $e_{ij}\sim\text{Bernoulli}(\sigma\mathrm{SPMI}_{ij})$ will be referred to as \textit{$\sigma$SPMI-induced Graph}. 

\subsection{Spectrum of the $\sigma$SPMI-induced Graph}
\label{sec:spectrum}
First of all, we look at the spectral properties of the $\sigma$SPMI-induced Graphs.\footnote{We define the graph spectrum as the set of eigenvalues of its adjacency matrix.} For this, we extract SPMI matrices from the {\tt text8} and {\tt enwik9} datasets using the {\sc hyperwords} tool of \citet{levy2015improving}. We use the default settings for all hyperparameters, except the word frequency threshold and context window size. 
We ignored words that appeared less than 100 times and 250 times in \texttt{text8} and \texttt{enwik9} correspondingly, resulting in vocabularies of 11,815 and 21,104 correspondingly. We additionally experiment with the context window size 5, which by default is set to 2. We generate random graphs from the $\sigma$SPMI matrices and compute their eigenvalues using the {\sc TensorFlow} library \citep{abadi2016tensorflow}, and the above-mentioned threshold of 250 for {\tt enwik9} was chosen to fit the GPU memory (11GB, RTX 2080 Ti). The eigenvalue distributions are provided in Figure~\ref{fig:eigvals}. 

The distributions seem to be symmetric, however, the shapes of distributions are far from resembling the Wigner semicircle law $x\mapsto\frac{1}{2\pi}\sqrt{4-x^2}$, which is the limiting distribution for the eigenvalues of many random symmetric matrices with i.i.d. entries \citep{wigner1955characteristic,wigner1958distribution}. This means that the entries of the $\sigma$SPMI-induced graph's adjacency matrix \textit{are} dependent, otherwise we would observe approximately semicircle distributions for its eigenvalues. We observe some similarity between the spectral distributions of the $\sigma$SPMI-induced graphs and of the so-called \textit{complex networks} which arise in physics and network science (Figure~\ref{fig:eigvals}). 

Notice that the connection between human language structure and complex networks was observed previously by \citet{cancho2001small}. A thorough review on approaching human language with complex networks was given by \citet{cong2014approaching}. In the following subsection we will specify precisely what we mean by a complex network.

\subsection{Clustering and Degree Distribution of the $\sigma$SPMI-induced Graph}\label{sec:clust_deg}
We will use two statistical properties of a graph -- degree distribution and clustering coefficient. The \textit{degree} of a given vertex $i$ is the number of edges that connects it with other vertices, i.e. $\deg(i)=\sum_{j\in\mathcal{V}}e_{ij}$. The clustering coefficient measures the average fraction of pairs of neighbors of a vertex that are also neighbors of each other. The precise definition is as follows.

Let us indicate by $\mathcal{G}_i=\{j\in\mathcal{V}\mid e_{ij}=1\}$ the set of nearest neighbors of a vertex $i$. By setting $l_i=\sum_{j\in\mathcal{V}}e_{ij}\left[\sum_{k\in\mathcal{G}_i;\ j<k}e_{jk}\right],
$ we define the local clustering coefficient as $C(i)=\frac{l_i}{{|\mathcal{G}_i|\choose2}}$, and the \textit{clustering coefficient} as the average over $\mathcal{V}$: $C=\frac{1}{n}\sum_{i\in\mathcal{V}}C(i)$.

Let $\bar{k}$ be the average degree per vertex, i.e. $\bar{k}=\frac1n\sum_{j\in\mathcal{V}}e_{ij}$. For random binomial graphs, i.e. graphs with edges $e_{ij}\,\,{\stackrel{\text{iid}}{\sim}}\,\,\mathrm{Bernoulli}(p)$, it is well known \citep{erdHos1960evolution} that 
$C\approx\frac{\bar{k}}{n}$  and $\deg(i)\,\,\sim\,\,\mathrm{Binomial}(n-1,p)$. A \textit{complex network} is a graph, for which $C\gg \frac{\bar{k}}{n}$ and $p(\deg(i)=k)\propto \frac{1}{k^\gamma}$,
where $\gamma$ is some constant \citep{10.5555/1855238}. The latter property is referred to as \textit{scale-free} (or \textit{power-law}) degree distribution.

We constructed $\sigma$SPMI-induced Graphs from the \texttt{text8} and \texttt{enwik9} datasets using context windows of sizes 2 and 5 and ignoring words that appeared less than 5 times,
and computed their clustering coefficients (Table~\ref{tab:clustering}) as well as degree distributions (Figure~\ref{fig:degree_dist}) using the {\sc NetworKit} tool \citep{staudt2016networkit}. {\sc NetworKit} uses the algorithm of \citet{schank2005approximating} to compute the clustering coefficient. 
As we see, the $\sigma$SPMI-induced graphs \textit{are} complex networks, and this brings us to the hyperbolic spaces.

\section{Complex Networks and Hyperbolic Geometry}\label{sec:cn_hg}
Complex networks are ``dual'' to hyperbolic spaces as was shown by \citet{krioukov2010hyperbolic}. They showed that any complex network, as defined in Section~\ref{sec:bpmi_cn}, has an effective hyperbolic geometry underneath. Apart from this, they also showed that any hyperbolic geometry implies a complex network: they placed randomly $n$ points (nodes) into a hyperbolic disk of radius $R$, and used $p_{ij}:=\sigma\left(c[R-x_{ij}]\right)$ as connection probability for connecting nodes $i$ and $j$, where $x_{ij}$ is the hyperbolic distance between $i$ and $j$, and $c$ is a constant. An example of such random graph is shown in Figure~\ref{fig:rhg}.
\begin{figure*}
    \begin{minipage}[t]{.39\textwidth}
    \includegraphics[width=\textwidth]{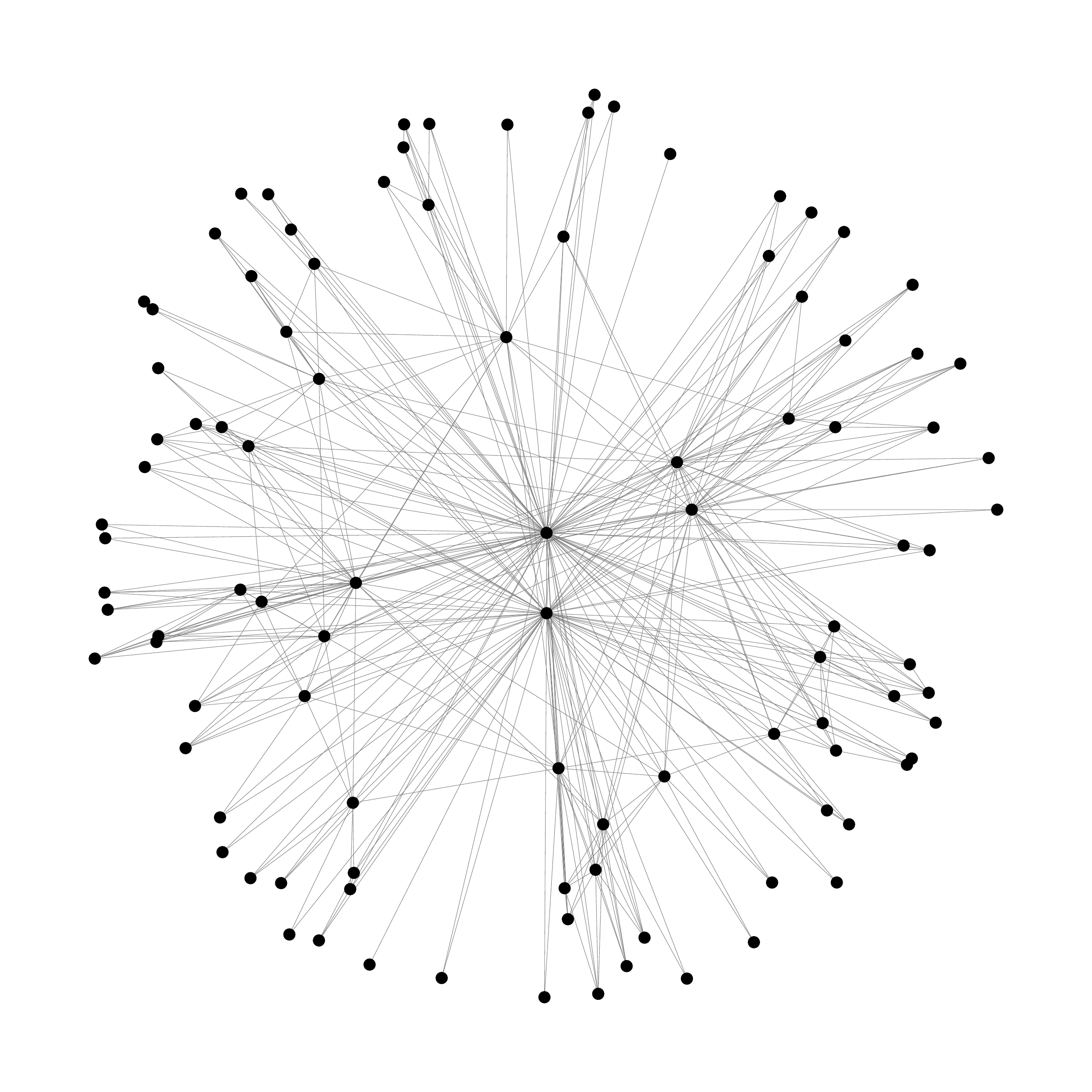}
    \caption{Rand. hyperbolic graph.}
    \label{fig:rhg}
    \end{minipage}\hfill\begin{minipage}[t]{.59\textwidth}\includegraphics[width=\textwidth]{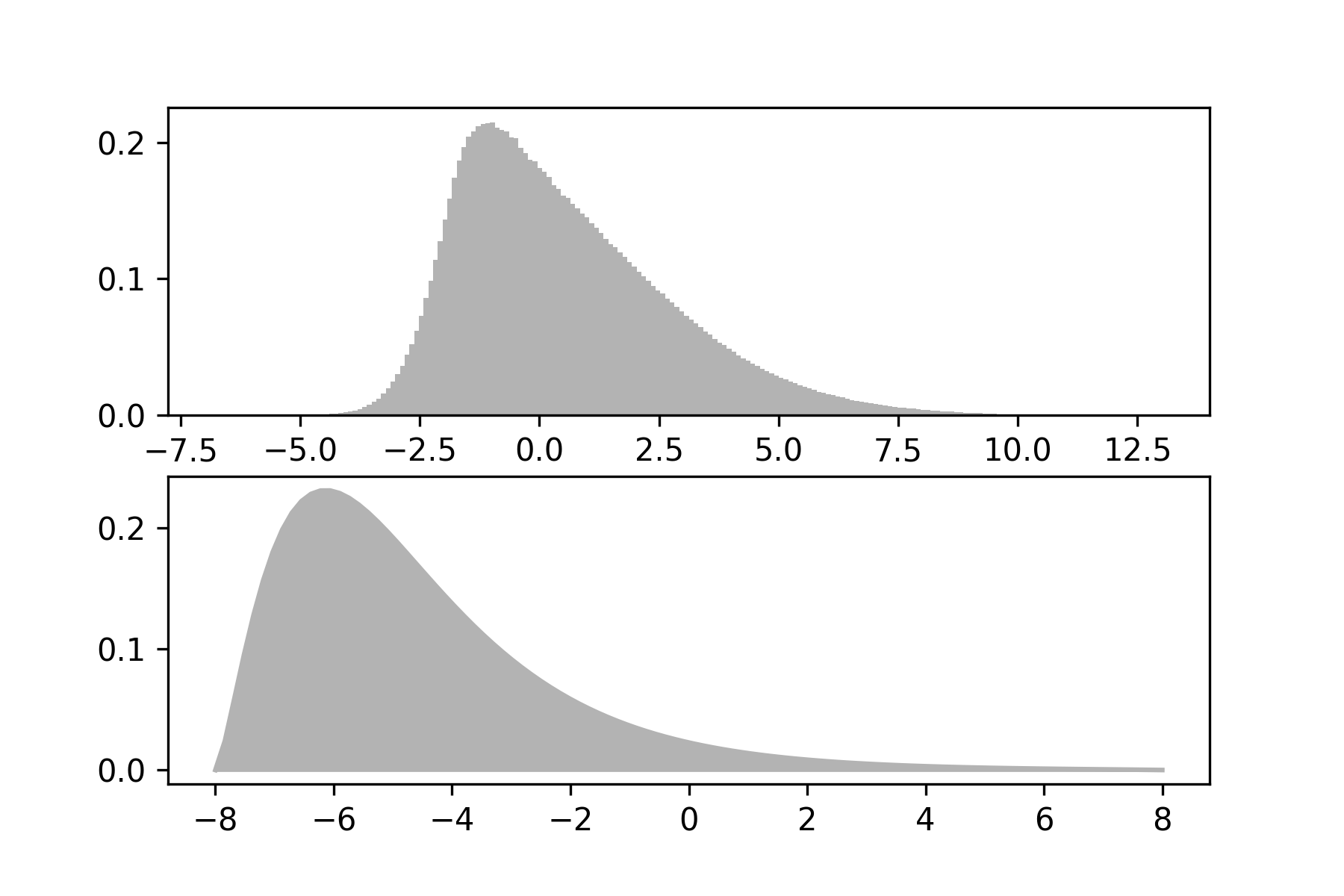}
    \caption{SPMI values distr'n (top) vs $R-X$.}
    \label{fig:spmi_hyperbolic}
    \end{minipage}
\end{figure*}
\citet{krioukov2010hyperbolic} showed that the resulting graph is a complex network. They establish connections between the clustering coefficient $C$ and the power-law exponent $\gamma$ of a complex network and the curvature of a hyperbolic space.

Comparing the construction of \citet{krioukov2010hyperbolic} to the way we generate a random graph from the $\sigma$SPMI matrix, and taking into account that  both methods produce similar structures (complex networks), we conclude that the distribution of the SPMI values should be similar to the distribution of $R-x_{ij}$, i.e. $\text{PMI}_{ij}-\log k\sim R-x_{ij}$. To verify this claim we compare the distribution of SPMI values with the p.d.f. of a random variable $R-X$,  where $X$ is a hyperbolic distance between two random points on the hyperbolic disk (the exact form of this p.d.f.\ is given in the Appendix~\ref{app:dist}). $R$ was chosen according to the formula $R=2\ln[8n/(\pi\bar{k})]$ \citep{krioukov2010hyperbolic}, where $\bar{k}$ is the average degree of the $\sigma$SPMI-induced Graph. The results are shown in Figure~\ref{fig:spmi_hyperbolic}. As we can see, the two distributions are indeed similar and the main difference is in the shift---distribution of $R-X$ is shifted to the left compared to the distribution of the SPMI values. This allows us reinterpreting the pointwise mutual information as the negative of hyperbolic distance (up to scaling and shifting).


\section{Conclusion}
It is noteworthy that the seemingly fragmented sections of scientific knowledge can be closely interconnected. In this paper, we have established a chain of connections between word embeddings and hyperbolic geometry, and the key link in this chain is the Squashed Shifted PMI matrix. Claiming that hyperbolicity underlies word vectors is not novel \citep{nickel2017poincare,tifrea2018poincar}. However, this work is the first attempt to \textit{justify} the connection between hyperbolic geometry and the word embeddings. In the course of our work, we discovered novel objects---Nonsigmoid SGNS and Squashed Shifted PMI matrix---which can be investigated separately in the future.

\section*{Acknowledgements}
This work is supported by the Nazarbayev University faculty-development competitive research grants program, grant number 240919FD3921. The authors would like to thank Zhuldyzzhan Sagimbayev for conducting preliminary experiments for this work, and anonymous reviewers for their feedback.

%
%
%
 \bibliographystyle{splncsnat}
 \bibliography{ref}

\appendix
\section{Auxiliary Results} \label{app:dist}
\begin{proposition} \label{prop:dist_distr} Let $X$ be a distance between two points that were randomly uniformly placed in the hyperbolic disk of radius $R$. The probability distribution function of $X$ is given by
\begin{equation}
    f_X(x)=\int_0^R\int_0^R\frac{\sinh(x)}{\pi\sqrt{1-A(r_1,r_2,x)}\sinh(r_1)\sinh(r_2)}\rho(r_1)\rho(r_2)dr_1dr_2,\label{eq:pdf_x}
\end{equation}
where $A(r_1,r_2,x)=\frac{\cosh(r_1)\cosh(r_2)-\cosh(x)}{\sinh(r_1)\sinh(r_2)}$, and $\rho(r)=\frac{\sinh r}{\cosh R-1}$.
\end{proposition}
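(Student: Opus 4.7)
The plan is to compute the density of $X$ by conditioning on the two radii, exploiting the rotational symmetry of the hyperbolic disk to reduce the problem to the angular gap between the two points, and then transforming this angular variable into the hyperbolic distance via the law of cosines.

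First I would set up the uniform measure on the disk. In native polar coordinates $(r,\theta)$, the hyperbolic area element is $\sinh(r)\,dr\,d\theta$, so the total area of the disk of radius $R$ is $2\pi(\cosh R - 1)$. A single uniform point therefore has joint density $\frac{\sinh r}{2\pi(\cosh R-1)}$; marginalising in $\theta$ gives precisely $\rho(r)=\frac{\sinh r}{\cosh R-1}$, with $\theta$ independently uniform on $[0,2\pi)$. Thus the two independent points are described by independent radii with density $\rho$ and independent uniform angles.

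Next, by rotational invariance of the hyperbolic metric, the distance depends on the two angles only through their difference $\Phi=\theta_2-\theta_1$; folding to $[0,\pi]$ gives a density $1/\pi$ for the effective angle $\phi$. Conditional on $(r_1,r_2)$, I would then invoke the hyperbolic law of cosines
\begin{equation*}
\cosh x = \cosh r_1\cosh r_2 - \sinh r_1\sinh r_2\cos\phi,
\end{equation*}
which is exactly the relation $\cos\phi = A(r_1,r_2,x)$. This map $\phi\mapsto x$ is monotone on $[0,\pi]$ with image $[\,|r_1-r_2|,\,r_1+r_2\,]$, so I can change variables: differentiating the law of cosines gives $\sinh(x)\,\tfrac{dx}{d\phi}=\sinh r_1\sinh r_2\sin\phi$, and writing $\sin\phi=\sqrt{1-\cos^2\phi}$ in terms of $A$ yields the Jacobian
\begin{equation*}
\left|\tfrac{d\phi}{dx}\right|=\frac{\sinh x}{\sinh r_1\sinh r_2\sqrt{1-A(r_1,r_2,x)^2}}.
\end{equation*}

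The conditional density of $X$ given $(r_1,r_2)$ is therefore $\tfrac{1}{\pi}\left|\tfrac{d\phi}{dx}\right|$, and integrating against the independent marginals $\rho(r_1)\rho(r_2)$ over $[0,R]^2$ produces the desired formula \eqref{eq:pdf_x}. The main subtlety, and the only real obstacle, is the bookkeeping at the boundary of the integration region: for fixed $(r_1,r_2)$ the value of $x$ only ranges over $[|r_1-r_2|,r_1+r_2]$, and outside this range the factor $\sqrt{1-A^2}$ becomes imaginary because no angle $\phi$ realises that distance. This has to be handled by implicitly restricting the $(r_1,r_2)$ integration to the region where $|A(r_1,r_2,x)|\le 1$; one also has to check that the endpoint singularities (where $\sin\phi=0$) are integrable, which is immediate since they are square-root type. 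Once this support issue is sorted, the formula follows from an entirely routine change-of-variables computation.
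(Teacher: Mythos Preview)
Your approach is essentially identical to the paper's: condition on the two radii (each with density $\rho$), use rotational invariance to reduce to a uniform angular gap on $[0,\pi]$, and convert to the distance via the hyperbolic law of cosines; the paper passes through the CDF of $\cos\gamma$ and differentiates, whereas you compute the Jacobian $|d\phi/dx|$ directly, but these are the same calculation and yield the same conditional density on the same support $[|r_1-r_2|,r_1+r_2]$. Your expression $\sqrt{1-A^2}$ is in fact the correct one (since $A=\cos\phi$), so the $\sqrt{1-A}$ appearing in the stated formula is a typo carried over from the paper.
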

The proof is by direct calculation and is omitted due to page limit.

\end{document}